\documentclass[12pt]{article}

\usepackage[margin=1in]{geometry}
\usepackage{authblk}
\usepackage{natbib}

\usepackage[utf8]{inputenc} 
\usepackage[T1]{fontenc}    
\usepackage{hyperref}       
\usepackage{url}            
\usepackage{booktabs}       
\usepackage{amsfonts, amsmath, amsthm}       
\usepackage{nicefrac}       
\usepackage{microtype}      
\usepackage{xcolor}         
\usepackage{mathrsfs}
\usepackage{graphicx}
\usepackage{subcaption}
\usepackage{color,soul}
\usepackage[normalem]{ulem}
\usepackage{wrapfig}

\newcommand{\R}{\mathbb{R}}

\newcommand{\KL}{\mathrm{KL}}
\newcommand{\ELBO}{\mathcal{L}_{\mathrm{ELBO}}}
\newcommand{\E}{\mathbb{E}}
\newcommand{\N}{\mathcal{N}}
\newcommand{\bw}{\mathbf{w}}
\newcommand{\bmu}{\boldsymbol{\mu}}
\newcommand{\bsig}{\boldsymbol{\sigma}}
\newcommand{\brho}{\boldsymbol{\rho}}

\newtheorem{proposition}{Proposition}

\title{Bayesian 3D Steerable CNNs: Enabling Equivariance and Uncertainty Quantification Simultaneously}

\author[1]{Abhishek Keripale} 
\author[3]{Ponkrshnan Thiagarajan} 
\author[1,2]{Susanta Ghosh\thanks{ Corresponding author: \texttt{susantag@mtu.edu}}}
\affil[1]{Department of Mechanical and Aerospace Engineering, Michigan Technological University, Houghton, MI 49931}
\affil[2]{The Center for Artificial Intelligence at the Institute of Computing and Cybersystems, Michigan Technological University, USA}
\affil[3]{Hopkins Extreme Materials Institute, Johns Hopkins University, Baltimore, MD 21218}

\date{}

\begin{document}

\maketitle

\begin{abstract}

Steerable convolutional neural networks (Steerable-CNNs) guarantee SE(3)-equivariance by parameterizing kernels as linear combinations of steerable basis functions, but their deterministic nature precludes uncertainty quantification—limiting their use in settings where confidence estimates are essential. We propose a Bayesian Steerable-CNN that places posterior distributions over the basis coefficients, yielding stochastic kernels while preserving equivariance exactly. The loss function of the model is obtained via variational inference and minimized by Bayes-by-Backpropagation. The framework admits a decomposition of predictive uncertainty into epistemic and aleatoric components. Empirically, the model attains competitive classification accuracy alongside an expected calibration error of 0.0263 and outperforms its deterministic counterpart by up to 6.17\% under distributional shift induced by additive Gaussian noise. Furthermore, we leverage the model's uncertainty estimates to enhance its performance significantly, achieving a notable gain—approximately 4\% higher accuracy across 84\% of the test dataset. A statistically significant negative correlation between epistemic uncertainty and prediction error confirms that the learned posterior variance is semantically meaningful. The framework unifies Bayesian uncertainty quantification with the inductive bias of equivariant CNNs.

\end{abstract}

\section{Introduction} \label{sec:intro}
Encoding the inherent symmetries of the data in a Machine Learning (ML) models is beneficial as they make the model more robust, endows with inductive bias to capture and respect the symmetry patterns in the data irrespective of its orientation. This property due to which the ML models, by construction, respect the symmetries of the data is referred to as equivariance and is obtained by introducing some symmetry constraints in the ML architecture. A neural network is equivariant to a symmetry group if transforming the input by a group element — such as a 3D rotation — produces a correspondingly transformed output, rather than an arbitrary one. This property is achieved by constraining convolutional kernels to a group-theoretic basis whose elements individually satisfy the symmetry constraint (referred to as the intertwiner or steerability constraint), ensuring that any learned kernel is equivariant by construction. The resulting inductive bias reduces the effective hypothesis space, enables weight sharing across orientations, and yields strong generalization on inherently symmetric domains such as 3D shape recognition and molecular property prediction.

Steerable CNNs~\citep{weiler20183d,cesa2022program,weiler2019general,cohen2016steerable} guarantee equivariance to compact groups $G \leq \mathrm{O}(3)$ by constraining convolutional kernels to a group-theoretic basis, and have demonstrated strong performance on 3D shape recognition~\citep{cesa2022program} and molecular property prediction~\citep{thomas2018tensor,anderson2019cormorant}. However, these architectures are entirely \emph{deterministic}: a single forward pass produces a single prediction with no associated confidence in the prediction. Unfortunately, a model’s predictions can be uncertain. This uncertainty arises from inherent randomness in the data (aleatoric uncertainty) or from a lack of knowledge or data (epistemic uncertainty). Uncertainty in a model’s prediction informs how much the predicted values can vary. 
The inability to quantify uncertainty is a critical limitation for a model in safety-sensitive domains such as medical image analysis and autonomous systems. Adding stochasticity to the machine learning models not only provides uncertainty quantification, but also provides an implicit regularization to weights which improves generalization to the unseen data compared to deterministic models~\citep{blundell2015weight}.

There are several modeling approaches for developing stochastic models and performing uncertainty quantification (UQ); among them, Bayesian methods are widely used due to their ability to combine prior beliefs with observed data via Bayes’ theorem and update predictive probabilities.
Bayesian deep learning \citep{blundell2015weight,gal2016dropout,mackay1992practical,shridhar2019comprehensive} follows this approach, it treats network weights as random variables and performs approximate posterior inference, enabling principled decomposition of predictive uncertainty into epistemic and aleatoric components \citep{kendall2017uncertainties,kwon2018uncertainty,thiagarajan2021explanation}. Such decomposition provides practitioners with actionable diagnostics: high epistemic uncertainty signals insufficient training data or model capacity, while high aleatoric uncertainty indicates inherent ambiguity in the input---information that is entirely inaccessible to point-estimate models.

A key challenge for developing a Bayesian model for steerable CNNs is that naively extending steerable CNNs to a Bayesian setting will break their equivariance property, since, randomly sampled convolution kernels will violate the intertwiner (or steerability) constraint. Thus, even if the mean kernel matrices are equivariant, stochastic realizations of kernel matrices will not be equivariant.

The intertwiner (or steerability)  constraint is a property of the steerable basis functions, and the equivariant kernels are linear combinations of these basis. We develop stochastic equivariant kernels by considering stochastic scalar coefficients in this linear combination while keeping the steerable basis functions deterministic. We develop a Bayesian model by considering the variational posterior of the scalar coefficients following the Bayes by backpropagation approach of \cite{blundell2015weight}. Thus, every kernel obtained by sampling form these posteriors of the coefficients maintains equivariance.

Our main contributions are:
\begin{itemize}
  \item A novel Bayesian formulation for steerable CNNs is proposed that obtains the posterior distribution of the coefficients of the kernel basis. The evidence lower bound loss function for steerable CNNs is obtained and optimized.  
  \vspace{-1pt}
  \item A proof that the proposed Bayesian model maintains $\mathrm{SE}(3)$-equivariance. 
  \item Empirical validation of the model on ModelNet10, achieving competitive accuracy while providing calibrated and decomposable (into epistemic + aleatoric) uncertainty estimates.
\end{itemize}

\section{Related Work} \label{sec:related}

In the pursuit of achieveing equivariance in a systematic manner, among various approaches, group-equivariant CNNs and steerable CNNs are the two prominent paradigms that exploit group and representation theoretic framework in embedding the equivariance in the model architecture itself.
Group-equivariant CNNs~\citep{cohen2016group} achieve equivariance by convolving over the group domain using the regular representation, imposing a one-to-one relation between group structure and weight-sharing. On the other hand, steerable CNNs~\citep{cohen2016steerable,weiler2019general} provide a strictly more general framework that supports arbitrary $G$-representations, including the irreducible representations (irreps) and their direct sum, for any compact group.

In parallel to the deterministic equivariant ML models there are significant progress in stochastic ML models. 
Among various ML models that provide uncertainty quantification (UQ), Bayesian methods have gained prominence due to their ability to incorporate prior knowledge. Bayesian deep learning (\citet{blundell2015weight}) models aims to infer the posterior distribution over weights using Bayes’ rule. This posterior is approximated by a simple, tractable distribution called the variational posterior—an approach known as variational inference, which casts Bayesian inference as an optimization problem. In this approach the divergence between the true posterior and the variational posterior is minimized, yielding the objective (loss) function to be optimized. 

The metric chosen to represent the divergence between the true posterior and the variational posterior is also extensively studied \citet{thiagarajan2025jensen,dieng2017variational,deasy2020constraining,li2016renyi,wan2020f}. \citet{gal2016dropout} showed that dropout at test time approximates Bayesian inference. 
One key advantage is that the uncertainty can be uniquely decomposed into epistemic and aleatoric components follows~\citet{kendall2017uncertainties} and~\citet{kwon2018uncertainty}.
UQ for ML model that consider other important aspects such as geometric learning have also been studied, for instance \citet{postels2019sampling} and \citet{liu2020simple} have explored uncertainty in point-cloud networks. 
There are equivariant probabilistic models such as the normalizing flows~\citep{kohler2020equivariant} and equivariant diffusion models~\citep{hoogeboom2022equivariant} impose equivariance on generative models. 
However, the formal equivariance guarantees and UQ are not studied in a single discriminative model. 

To the best of our knowledge, this is the first work that integrates stochastic inference and steerable convolutions, and exploits it to develop a Bayesian steerable convolution model.

\section{Background} \label{sec:background}
 
\subsection{G-Steerable Convolutions}

At layer $n$, a standard CNN produces $K_n$ feature maps, each modeled as a scalar-valued function $f_k : \mathbb{R}^3 \to \mathbb{R}$. Collectively, these can be represented as a single vector-valued function $f : \mathbb{R}^3 \to \mathbb{R}^{K_n}$, which assigns to each spatial location $x \in \mathbb{R}^3$ a feature vector $f(x) \in \mathbb{R}^{K_n}$. We define the feature space at layer $n$ as $\mathcal{F}_n = \{ f : \mathbb{R}^3 \to \mathbb{R}^{K_n}  \}$, the space of square-integrable vector-valued functions on $\mathbb{R}^3$ where $f$ is compact and continuous. While this formulation is sufficient for standard CNNs, it treats all feature channels as independent scalar coordinates and carries no information about how the feature vector $f(x)$ should transform when the input undergoes a geometric transformation. This is precisely the structure that must be made explicit in order to define equivariant CNNs.
\vspace{-4pt}
\begin{wrapfigure}{l}{0.55\textwidth}
\centering
\includegraphics[width=1\linewidth]{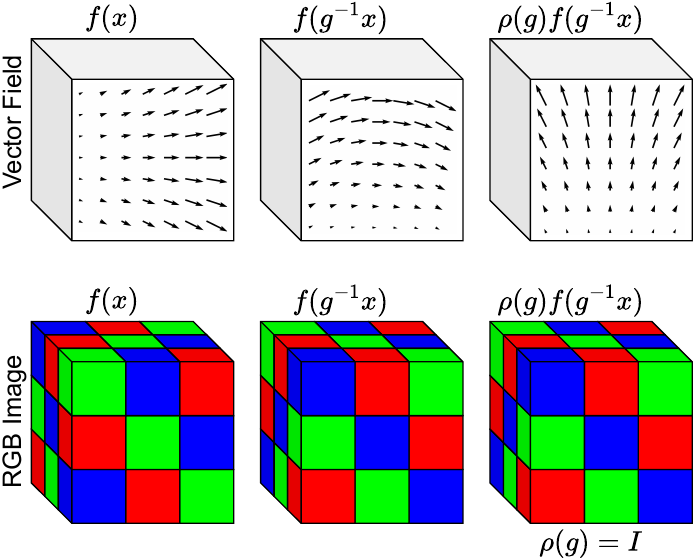}
\vspace{-10pt}
\caption{Transformation of feature fields under a rotation $g$. \textbf{Top row:} a vector field, where the transformation proceeds in two stages — vectors are first moved to new positions, followed by a rotation of the vectors itself. \textbf{Bottom row:} an RGB image, where rotation affects only spatial positions, while the channel space remains unchanged carrying trivial representation $\rho(g)=I$.}
\label{fig:transformation_of_feature_fields}
\end{wrapfigure}

A feature field in equivariant CNNs $f : \mathbb{R}^3 \to \mathbb{R}^{c}$ is said to be of geometric type $\rho$ if it transforms under the joint action of a rotation $g \in G$ and a translation $t \in \mathbb{R}^3$ according to the induced representation
\vspace{-1pt}
\begin{equation*}
    [\pi_{\rho}(g,t)\, f](x) \;=\; \rho(g)\, f\!\bigl(g^{-1}(x - t)\bigr)
\end{equation*}
\vspace{-1pt}
where $\rho : G \to \mathrm{GL}(\mathbb{R}^{c})$ is a group representation acting on the fiber $\mathbb{R}^{c}$ at each spatial location. The transformation therefore proceeds in two stages: the spatial argument is moved to the pre-image location $g^{-1}(x-t)$, and the feature vector at that location is simultaneously transformed by the linear map $\rho(g)$. This two-stage transformation of feature fields is illustrated in Fig.~\ref{fig:transformation_of_feature_fields} for a vector field and an RGB image. In the case of a vector field, the representation $\rho$ is the standard rotation matrix, so the channels are coupled or mixed and transform jointly under rotations. In the case of an RGB image, $\rho(g) = I$ for all $g \in G$, i.e. the trivial representation, and the channels remain unchanged while only the spatial positions are affected (Refer to \cite{weiler20183d} for more details). Two feature fields may therefore have the same dimensionality $c$ yet exhibit fundamentally different transformation behaviour depending solely on the choice of $\rho$, a distinction that is entirely invisible to standard CNNs.

This observation motivates endowing each feature field with a geometric type, specified by the representation $\rho$, rather than treating it as an arbitrary element of $\mathbb{R}^c$. A convolutional layer mapping a $\rho_{\mathrm{in}}$-type input field to a $\rho_{\mathrm{out}}$-type output field is then equivariant if and only if convolution commutes with the induced group action on both fields, i.e.
\vspace{-1pt}
\begin{equation}
    K \star \bigl[\pi_{\rho_{\mathrm{in}}}(g,t)\, f\bigr]
    \;=\;
    \pi_{\rho_{\mathrm{out}}}(g,t)\, \bigl[K \star f\bigr],
\end{equation}
for all $g \in G$ and $t \in \mathbb{R}^3$. This condition places a hard constraint on the convolutional kernel $K$, known as the steerability constraint, which characterises the subspace of kernels that can serve as equivariant maps between input and output feature fields with their geometric types $\rho_{\mathrm{in}}:G \to \R^{d_{in} \times d_{in}}$ and $\rho_{\mathrm{out}}:G \to \R^{d_{out} \times d_{out}}$ respectively. 

The steerability constraint (also referred to as intertwiner) is given as $\forall g \in G,\, \mathbf{x} \in \mathbb{R}^n$,
\vspace{-1pt}
\begin{equation} \label{eq:kernel_constraint}
    K(g \cdot \mathbf{x}) = \rho_\text{out}(g)\, K(\mathbf{x})\, \rho_\text{in}(g)^{-1}
\end{equation}
For computational convenienence vectorized form of Eq. \ref{eq:kernel_constraint} is used with $K(\cdot)=\text{vec}(K(\cdot)): \mathbb{R}^n \to \mathbb{R}^{d_{out}.d_{in}}$ to parametrize a kernel satisfying the steerability condition. Thus, $\forall g \in G,\, \mathbf{x} \in \mathbb{R}^n$: $K(g \cdot\mathbf{x}) = \big[ (\rho_{\text{in}} \otimes \rho_{\text{out}})(g)\big] K(\mathbf{x})$
where, $(\rho_{\text{in}} \otimes \rho_{\text{out}})(g)$ is a matrix resulting from Kronecker product of two matrices $\rho_{\text{in}}(g)$ and $\rho_{\text{out}}(g)$. Refer to \cite{cesa2022program} for more information.

Since any representation of a compact group decomposes into a direct sum of irreducible
representations (irreps) by the Peter--Weyl theorem, it suffices without loss of generality
to consider $\rho_{\mathrm{in}} = \rho_l$ and $\rho_{\mathrm{out}} = \rho_J$ to be irreps,
indexed by $l, J \in \hat{G}$~\citep{cesa2022program, weiler2019general,veefkind2024probabilistic}. The tensor
product $\rho_l \otimes \rho_J$ is itself a $G$-representation but not necessarily
irreducible; it admits a further decomposition into irreps via the Clebsch--Gordan
decomposition~\citep{cesa2022program, veefkind2024probabilistic}:
\vspace{-12pt}
\begin{equation}
    \label{eq:cg_decompose}
    (\rho_{l} \otimes \rho_{J})(g)=[\text{CG}^{lJ}]^T \bigg( \bigoplus_{j} \bigoplus_s^{[j(lJ)]}\rho_j(g) \bigg) \text{CG}^{lJ}
\end{equation}
\noindent where, $[j(lJ)]$ is the multiplicity of the irrep $\rho_j$, and $\text{CG}^{lJ}$ is the change of basis matrix.
As the direct sum takes the block diagonal structure, the blocks of $\text{CG}^{lJ}$ act only on corresponding specific irrep. For example, $\text{CG}_{j_1}^{lJ}$ acts on $\rho_{j_1}$ with its respective multiplicity and so on. Thus, following Theorem 2 of \cite{veefkind2024probabilistic}), the Eq. \ref{eq:cg_decompose} can also be written with regular summation instead of direct sum  as $(\rho_{l} \otimes \rho_{J})(g)=\sum_j \sum_s^{[j(lJ)]} \text{CG}_s^{j(lJ)}\rho_j(g)$ with the coefficients $\text{CG}_s^{j(lJ)} \in \mathbb{R}^{d_j \times d_l d_J}$ being denoted as the $s$-th occurence of $\rho_j$ (Refer Sec. 2.2.1: Theorem 2 of \cite{veefkind2024probabilistic}). These $\text{CG}_s^{j(lJ)}$ matrices are the resulting Clebsch-Gordan coefficients.
 \footnote{The block associated with the $s$-th occurence of $\rho_j$ is denoted as $\text{CG}_s^{j[lJ]} \in \mathbb{R}^{d_j \times d_l d_J}$ \cite{cesa2022program}.}

The solution to steerability constraint in Eq. \ref{eq:kernel_constraint} is to expand the kernel as a linear combination of a finite dimensional linear basis subspace, called steerable basis. For the special Euclidean group $\mathrm{SE}(3)$, these basis are the Wigner D matrices which are the irreducible representations of the $\mathrm{SO}(3)$. The convolution kernel expanded in steerable basis is given as,
\vspace{-5pt}
\begin{equation} \label{eq:kernel_expansion}
    K(\mathbf{x}; \mathbf{w}) = \sum_{b=1}^B w_b \kappa_b(\mathbf{x})
\end{equation}
\noindent where $\{ \kappa_b\}_{b=1}^B$ are the steerable basis elements and $\mathbf{w} = (w_1, \ldots, w_B) \in \mathbb{R}^B$ are the learnable basis coefficients. The basis elements $\kappa_b(\mathbf{x})$ are fixed and computed once before training. They are equivariant by construction following Eq. \ref{eq:kernel_constraint}. 
\vspace{-3pt}
\begin{equation} \label{eq:kernel_basis_equivariance}
    \kappa_b(g\cdot\mathbf{x}) =  \rho_\text{out}(g)\kappa_b(\mathbf{x})\rho_\text{in}(g)^{-1} \quad \forall g \in G, \forall b
\end{equation}
\vspace{-5pt}
Thus, equivariance is preserved by construction, regardless of the values of $\mathbf{w}$.

\section{Proposed Bayesian Steerable CNN} \label{sec:methodology}
 
\subsection{Variational Inference for Steerable CNN}
The core building block of our approach is the stochastic steerable $\R^3$-convolution. This stochastic convolution is formulated by replacing the deterministic weight vector $\bw \in \R^B$ of the steerable convolution layer~\citep{cesa2022program} with stochastic scalar coefficients of steerable basis function. The introduction of stochasticity preserves the equivariance of the steerable convolution as shown below. 

\begin{proposition}[Preservation of Equivariance]
\label{prop:equivariance}
Let $\{\kappa_b\}_{b=1}^{B}$ be a $G$-steerable basis set satisfying Eq. ~\eqref{eq:kernel_constraint}. Let $q(\bw; \lambda)$ be any probability distribution on $\R^B$. Then for $\bw^{(s)} \sim q(\bw;\lambda)$, the stochastic kernel $K(x;\, \bw^{(s)}) = \sum_{b=1}^{B} w_b^{(s)}\, \kappa_b(x)$ satisfies the steerability constraint Eq.~\eqref{eq:kernel_constraint}.
\end{proposition}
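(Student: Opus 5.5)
The plan is to prove the claim pointwise in the sample: fix an arbitrary realization $\bw^{(s)} \in \R^B$ in the support of $q(\bw;\lambda)$ and show that the deterministic kernel $K(\mathbf{x};\bw^{(s)})$ satisfies Eq.~\eqref{eq:kernel_constraint}. Because the conclusion must hold for every realization, it holds almost surely (in fact surely) for $\bw^{(s)} \sim q$. No property of $q$ is used beyond the fact that its samples are real vectors in $\R^B$. The stochasticity therefore only enters through the order of quantifiers: we prove a statement valid for all $\bw$ and then specialize it to samples.

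For fixed $g \in G$ and $\mathbf{x}$, the key computation is a chain of three equalities. First, expand $K(g\cdot\mathbf{x};\bw^{(s)}) = \sum_b w_b^{(s)}\kappa_b(g\cdot\mathbf{x})$ by Eq.~\eqref{eq:kernel_expansion}. Next, replace each $\kappa_b(g\cdot\mathbf{x})$ by $\rho_\text{out}(g)\kappa_b(\mathbf{x})\rho_\text{in}(g)^{-1}$ using Eq.~\eqref{eq:kernel_basis_equivariance}. Finally, pull the scalars $w_b^{(s)}$ through and factor the fixed matrices $\rho_\text{out}(g)$ and $\rho_\text{in}(g)^{-1}$ out of the finite sum by bilinearity of matrix multiplication. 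This leaves $\rho_\text{out}(g)\bigl(\sum_b w_b^{(s)}\kappa_b(\mathbf{x})\bigr)\rho_\text{in}(g)^{-1} = \rho_\text{out}(g)K(\mathbf{x};\bw^{(s)})\rho_\text{in}(g)^{-1}$. Put differently, the set of kernels satisfying the constraint is a linear subspace, and every sample lies in the span of the basis, which is contained in that subspace.

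As an optional check, the argument can be repeated in the vectorized form $K(g\cdot\mathbf{x}) = (\rho_\text{in}\otimes\rho_\text{out})(g)K(\mathbf{x})$. Since $(\rho_\text{in}\otimes\rho_\text{out})(g)$ is a single linear map, it commutes with the linear combination in the same way. I would also note that the kernel constraint implies layer equivariance via the equivalence stated before Eq.~\eqref{eq:kernel_constraint}, so each sampled network is $\mathrm{SE}(3)$-equivariant.

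There is no genuine obstacle here. The only point needing care is the quantifier order: each basis element satisfies the constraint for all $g$, and the coefficients are independent of $g$ and $\mathbf{x}$, so the constraint transfers to every sample simultaneously for all $g$.
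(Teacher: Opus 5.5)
Your proposal is correct and matches the paper's proof: fix an arbitrary coefficient vector, apply the basis-level constraint (Eq.~\eqref{eq:kernel_basis_equivariance}) term by term, and factor $\rho_{\mathrm{out}}(g)$ and $\rho_{\mathrm{in}}(g)^{-1}$ out of the finite sum by linearity. Since this holds for every realization, it holds for every sample from $q$. The linear-subspace remark and the vectorized-form check are optional extras that the paper does not need.
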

 
\textbf{Proof.}\;
For any $\bw^{(s)} \in \R^B$, $g \in G$, and $x \in \R^n$:
\begin{align}
  K(g \cdot x;\, \bw^{(s)})
    &= \textstyle\sum_{b} w_b^{(s)}\, \kappa_b(g \cdot x) \notag\\
    &= \textstyle\sum_{b} w_b^{(s)}\, \rho_{\mathrm{out}}(g)\, \kappa_b(x)\,
       \rho_{\mathrm{in}}(g)^{-1}
       \tag{by Eq.~\ref{eq:kernel_basis_equivariance}} \\
    &= \rho_{\mathrm{out}}(g) \bigg( \textstyle\sum_{b} w_b^{(s)}\, \kappa_b(x)\bigg)\,
       \rho_{\mathrm{in}}(g)^{-1} \notag \\
    &= \rho_{\mathrm{out}}(g)\,
       K(x;\,\bw^{(s)})\,
       \rho_{\mathrm{in}}(g)^{-1} \notag
\end{align}
Thus, $K$ satisfies the constraint in Eq. \ref{eq:kernel_constraint}. 
Since this holds for any arbitrary $\bw^{(s)} \sim q$,  it holds for all $\bw^{(s)}$ sampled from a distribution $q$.
 
\smallskip
 
This result is the theoretical foundation of our approach: stochasticity in the coefficient space \emph{cannot} break equivariance, because the basis---not the coefficients---is what enforces the intertwiner constraint. No special architecture modifications or equivariance-aware sampling are required.

To learn the stochastic steerable convolution, a mean-field Gaussian variational posterior (Eq.~\ref{eq:posterior}) over the $B$ steerable basis coefficients is assumed following the Bayes by backpropagation  framework~\citep{blundell2015weight,shridhar2019comprehensive}. 

\vspace{-4pt}
\begin{equation}\label{eq:posterior}
    q(\bw \mid \boldsymbol{\theta})
    = \prod_{b=1}^{B} \N\!\left(w_b;\, \mu_b,\, \sigma_b^2\right),
    \qquad
    \sigma_b = \log\!\left(1 + e^{\rho_b}\right),
\end{equation}

where the variational parameters $\boldsymbol{\theta} = \{\bmu, \brho\}$ are learned by maximising the Evidence Lower Bound (ELBO):
\vspace{-4pt}
\begin{equation}\label{eq:elbo}
    \ELBO(\boldsymbol{\theta})
    = \E_{q(\bw|\boldsymbol{\theta})}\!\left[\log p(\mathcal{D} \mid \bw)\right]
    - \KL\!\left[q(\bw \mid \boldsymbol{\theta}) \;\|\; p(\bw)\right].
\end{equation}

Since both the prior $p(\bw) = \N(\mathbf{0},\, \sigma_p^2 \mathbf{I})$ and the posterior are Gaussian, the KL divergence admits a closed-form expression 
\vspace{-6pt}
\begin{equation}\label{eq:kl}
     \KL\!\left[q \;\|\; p\right]
     = \frac{1}{2} \sum_{b=1}^{B}
     \left[
         2\log\frac{\sigma_p}{\sigma_b}
         - 1
         + \frac{\sigma_b^2}{\sigma_p^2}
         + \frac{\mu_b^2}{\sigma_p^2}
     \right].
\end{equation}

We adopt a hybrid estimator: the KL term is computed analytically and the log-likelihood is estimated via a single Monte Carlo sample drawn through the reparameterisation trick~\citep{kingma2013auto},
\vspace{-4pt}
\begin{equation}
\label{eq:reparametrization}
    \bw = \bmu + \boldsymbol{\varepsilon} \odot \bsig \;\;; \boldsymbol{\varepsilon} \sim \N(\mathbf{0}, \mathbf{I})
\end{equation}

Each layer maintains learnable vectors $\bmu \in \R^B$ and $\brho \in \R^B$, initialised from $\N(0, 0.1^2)$ and $\N(-3, 0.1^2)$ respectively, with prior $p(\bw) = \N(\mathbf{0},\, \sigma_p^2 \mathbf{I})$, $\sigma_p = 0.1$ following \citep{shridhar2019comprehensive}. During training, a weight sample is drawn via the reparameterisation trick in Eq. \ref{eq:reparametrization} and expanded into a full convolutional filter via the steerable basis following Eq. \ref{eq:kernel_expansion} with $\{\kappa_b\}_{b=1}^{B}$ as the Wigner--Eckart basis for $\mathrm{SO}(3)$ and frozen thereafter. During inference, the mean weights $\bw = \bmu$ are used by default, and stochastic samples are drawn only when uncertainty estimation is requested.

In practice, we use the per-batch ELBO loss for the model, which is defined as:
\vspace{-5pt}
\begin{equation}\label{eq:loss}
    \mathcal{L}_{\mathrm{train}} = \mathcal{L}_{\mathrm{CE}}(\hat{y}, y) + \beta \cdot \KL_{\mathrm{total}},
\end{equation}
where the log-likelihood in \eqref{eq:elbo} is modeled as the cross-entropy $\mathcal{L}_{\mathrm{CE}}$, evaluated at a single reparameterised weight sample, and $\KL_{\mathrm{total}} = \sum_{\ell} \KL^{(\ell)}$ sums the closed-form KL divergence (Eq. \ref{eq:kl}) over all Bayesian layers, and $\beta$ is the KL annealing coefficient arising from minibatch training following \citet{blundell2015weight}. The value of $\beta$ is taken as  $10^{-5}$ in the experiments unless stated otherwise. 
 
\subsection{Uncertainty Quantification}
The Bayesian formulation provides principled uncertainty estimation that is entirely inaccessible to deterministic models. At inference, $N$ 
stochastic forward passes are performed, each drawing $\bw^{(n)} \sim q(\bw|\boldsymbol{\theta})$. Let $\hat{p}_n = \mathrm{softmax}(f_{\bw^{(n)}}(\mathbf{x})) \in \Delta^{C-1}$ and $\bar{p} = \frac{1}{N}\sum_{n} \hat{p}_n$. Following~\citet{kwon2018uncertainty,kendall2017uncertainties,thiagarajan2021explanation}, the total predictive uncertainty decomposes into two parts aleatoric and epistemic, as:
\vspace{-4pt}
\begin{equation}
\label{eq:uncertainty}
    \mathrm{Var}_{q(\bw|\boldsymbol{\theta})}[p(\hat{y}|\hat{x}, \bw)] = \underbrace{\frac{1}{N}\sum_{n=1}^{N} \left(\mathrm{diag}(\hat{p}_{n})-\hat{p}_{n}\hat{p}_{n}^{T}\right)}_{\text{aleatoric } (\Sigma_{\mathrm{ale}})} + 
    \underbrace{\frac{1}{N}\sum_{n=1}^{N} (\hat{p}_{n}-\bar{p})(\hat{p}_{n}-\bar{p})^T}_{\text{epistemic } (\Sigma_{\mathrm{epi}})},
\end{equation}
\vspace{-12pt}

with scalar summaries $u_{\mathrm{ale}} = \mathrm{tr}(\Sigma_{\mathrm{ale}})$ and $u_{\mathrm{epi}} = \mathrm{tr}(\Sigma_{\mathrm{epi}})$. 
This decomposition separates uncertainty due to inherent input data ambiguity (i.e., the aleatoric component) from uncertainty arising from limited or unrepresentative training data (i.e., the epistemic component)—two distinct sources that require different responses.
In the present work, we have considered $N = 30$.

\paragraph*{Expected Calibration Error (ECE):} To measure the discrepancy between predicted confidence and empirical accuracy, we compute the Expected Calibration Error (ECE) \cite{guo2017calibration}. ECE is calculated by partitioning predictions into equally spaced bins based on confidence scores. Each bin $B_m$ is the set of samples whose prediction confidence falls between the interval $I_m = \big(\frac{m-1}{M}, \frac{m}{M}\big ]$, where $M$ is total number of equal width bins. It computes the weighted average of the absolute difference between the average confidence and accuracy within each bin: 
$ECE = \sum_{m=1}^{M} \frac{|B_m|}{n} \left| \text{acc}(B_m) - \text{conf}(B_m) \right|$
where $n$ is the total number of samples, $|B_m|$ is the number of samples in bin $m$, \text{acc}($B_m$) is the empirical accuracy of predictions in bin $m$, and \text{conf}($B_m$) is the average predicted confidence in that bin. Lower ECE indicates better calibration, i.e., predicted probabilities more closely match observed frequencies.

\subsection{Equivariance Verification} \label{sec:equivariance_verification}
 
Equivariance is verified empirically by applying a rotation $R_j$ to the input voxel. Nine such rotations are applied to validate the Equivariance: $\pi/2$, $\pi$, $3\pi/2$ about each mutually orthogonal axes, parallel to the grids and measuring the mean $\mathbb{L}_2$ difference between original and rotated logit outputs:
\vspace{-4pt}
\begin{equation}
\label{eq:equiv_error}
    \epsilon_{\mathrm{eq}} = \frac{1}{|\mathcal{R}|} \sum_{R_j \in \mathcal{R}}\; \frac{1}{|\mathcal{D}_{\mathrm{test}}|} \sum_{i}\, \bigl\|f(\mathbf{x}_i) - f(R_j\,\mathbf{x}_i)\bigr\|_2,
\end{equation}
\vspace{-10pt}

where $f$ denotes the model output using posterior mean weights on an input $\mathbf{x}$, $\mathcal{R}$ denotes the set of rotations $\pi/2$, $\pi$, $3\pi/2$ about each mutually orthogonal axes, parallel to the grids, $|\mathcal{D}|$ denotes the size of the dataset. Since the final pooling retains only the $\ell=0$ component --- the invariant features, $\epsilon_{\mathrm{eq}}$ is expected to be near machine precision, where the $\ell$ denotes order of the basis. The detailed explanation can be found in Appendix \ref{sec:invariance_verification}.

\section{Experiments}\label{sec:Experiments}
All experiments were implemented in Python using PyTorch~\citep{paszke2019pytorch}. The equivariant neural network architecture was entirely built using the \texttt{escnn} library~\citep{cesa2022program}, which provides a principled framework for constructing steerable CNNs with exact equivariance to the rotation group $SO(3)$. Bayesian inference via Bayes-by-Backpropagation was implemented on top of this framework to enable uncertainty quantification. All models were trained and evaluated on the ModelNet10 dataset~\citep{wu20153d} using NVIDIA A100-SXM4-80GB GPU on the NERSC Perlmutter HPC facility.

\subsection{ModelNet10 classification} \label{subsec:modelnet10_classification}
We first considered the ModelNet10 dataset for classification which is introduced by \citet{wu20153d}. This is a 10 class dataset consisting of 3D CAD models of desk, night stand, bed, toilet, etc. It contains total 4899 CAD models stored as polygon meshes with official $80\% -20\%$ split for training (3991 CAD models) and testing (908 CAD models) respectively. We further split the training data, allocating about $10\%$ for validation. As a result, our training set contains 3592 samples, while validation set includes 399 samples. A few training samples of the ModelNet10 dataset has been shown in the Appendix.

\section{Results}
\subsection{Training Dynamics}

The evolution of the classification accuracy over 1000 epochs on the ModelNet10 benchmark dataset is shown in Figure~\ref{fig:accuracy_plot}. Both training and validation accuracy rise sharply in the first 40 epochs, with the model reaching $93.68\%$ training accuracy and $89.22\%$ validation accuracy by convergence, indicating that the Bayesian SE(3)-equivariant architecture learns effectively. The small generalisation gap between training and validation accuracy throughout training suggests that the KL regularisation term successfully constrains the posterior and prevents the model from collapsing to a point estimate.

The model achieved a test accuracy of 86.34\%, indicating good classification performance. Its Expected Calibration Error (ECE) of 0.0263 suggests that the predicted confidences are quite well aligned with empirical correctness, with an average calibration gap of about 6\% points. Furthermore, the low ECE indicates that the posterior predictive probabilities are well calibrated, meaning the model’s confidence scores are broadly consistent with observed accuracy across the test set.

\begin{figure}[h]
  \centering
  \begin{subfigure}[b]{0.49\linewidth}
    \centering
    \includegraphics[width=\linewidth]{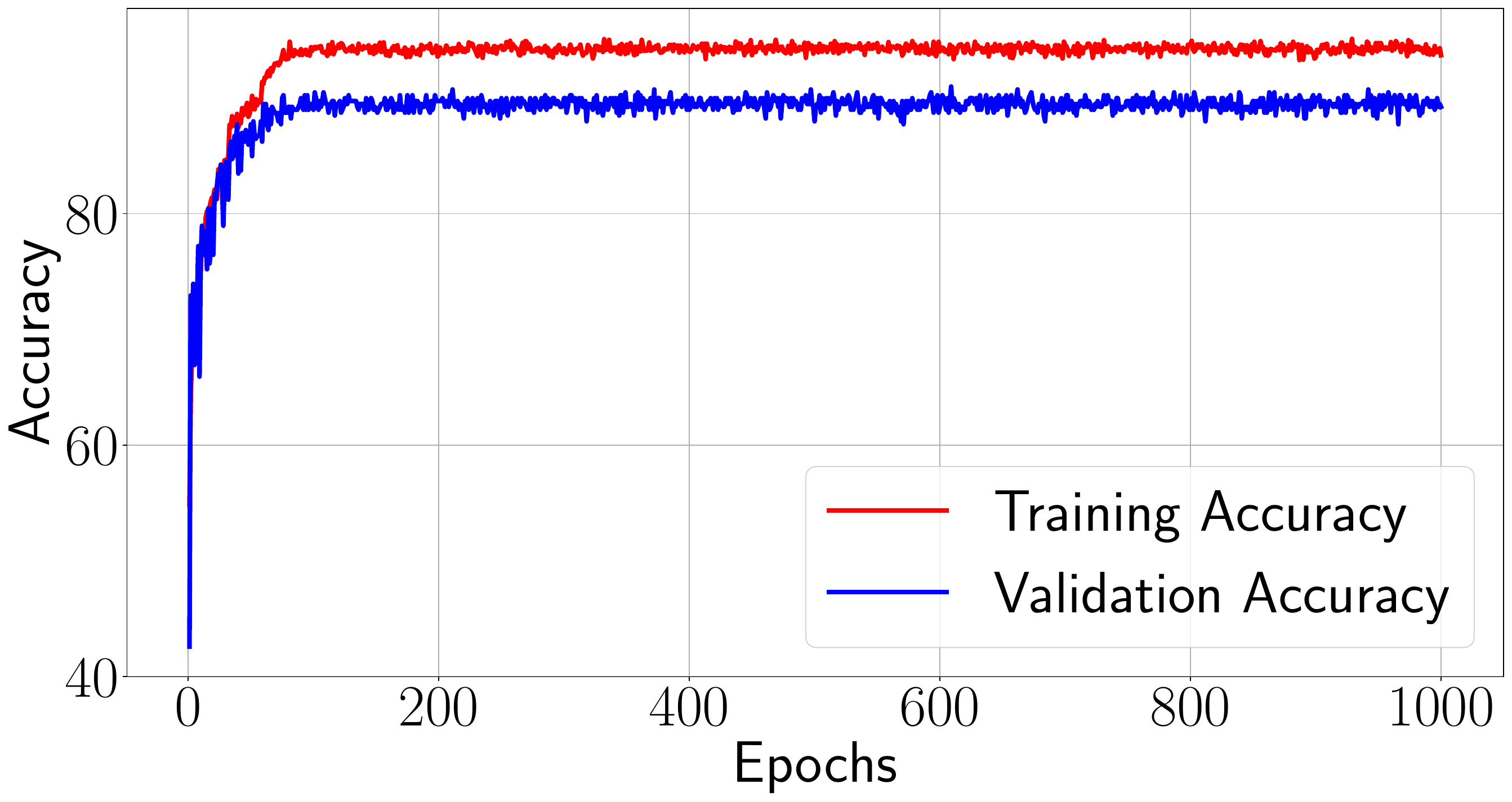}
    \caption{Training and Validation accuracy trend while training the model for 1000 epochs }
    \label{fig:accuracy_plot}
  \end{subfigure}
  \hfill
  \begin{subfigure}[b]{0.49\linewidth}
    \centering
    \includegraphics[width=\linewidth]{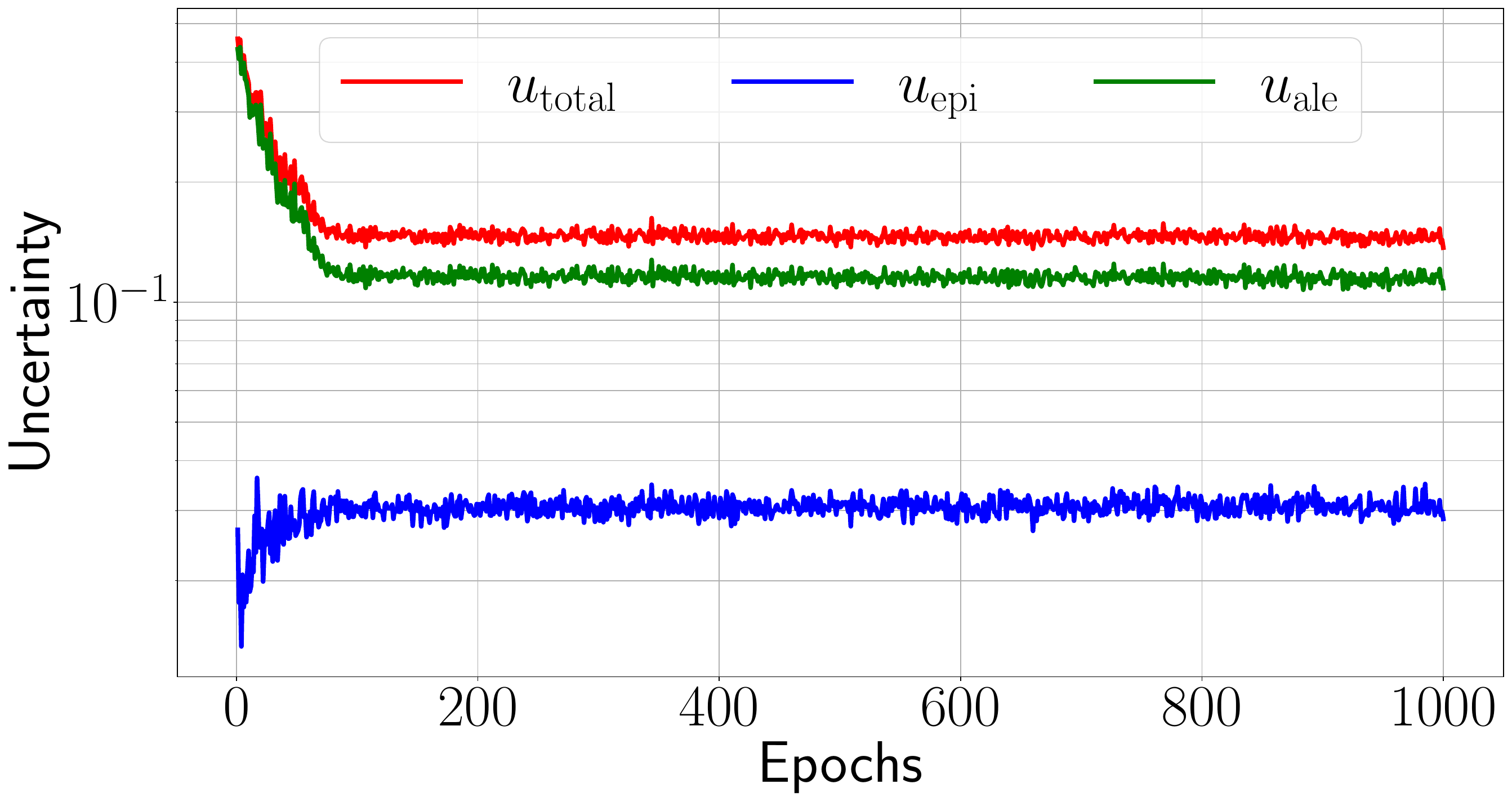}
    \caption{Convergence of total, aleatoric and epistemic uncertainty with training}
    \label{fig:epoch_uncertainty}
  \end{subfigure}
  \caption{(a) Training and Validation accuracy of Bayesian Steerable CNN. We notice that accuracy increasing with epochs and later converges. (b) Converging trend of total $(u_{\mathrm{total}})$, aleatoric $(u_{\mathrm{ale}})$, and epistemic $(u_{\mathrm{epi}})$ uncertainty over 1000 epochs of training.}
  \label{fig:uncertainty_convergence}
\end{figure}

\subsection{Classification Analysis}
Figure~\ref{fig:confusion_matrix} shows the normalized confusion matrix evaluated on the ModelNet10 test data set. For geometrically distinct objects, the models demonstrates strong accuracy per-class (for e.g., sofa, chair, toilet). The geometries of these classes are evidently distinct from each other enabling effective feature space mapping. The class with lowest accuracy is desk, which is presumably being confused with related horizontal flat-surfaced furniture (e.g. bed) as can be seen from Figure~\ref{fig:data_matrix_more_samples}.

We empirically verify Proposition~\ref{prop:equivariance} by measuring the invariance error of the trained model under random $\mathrm{SO}(3)$ rotations of the input, obtaining $\epsilon_{\mathrm{inv}} = 3.41 \times 10^{-5}$. Full details of the verification procedure are given in Appendix~\ref{sec:invariance_verification}.

\begin{figure}[htbp]
  \centering
  \includegraphics[width=0.99\linewidth]{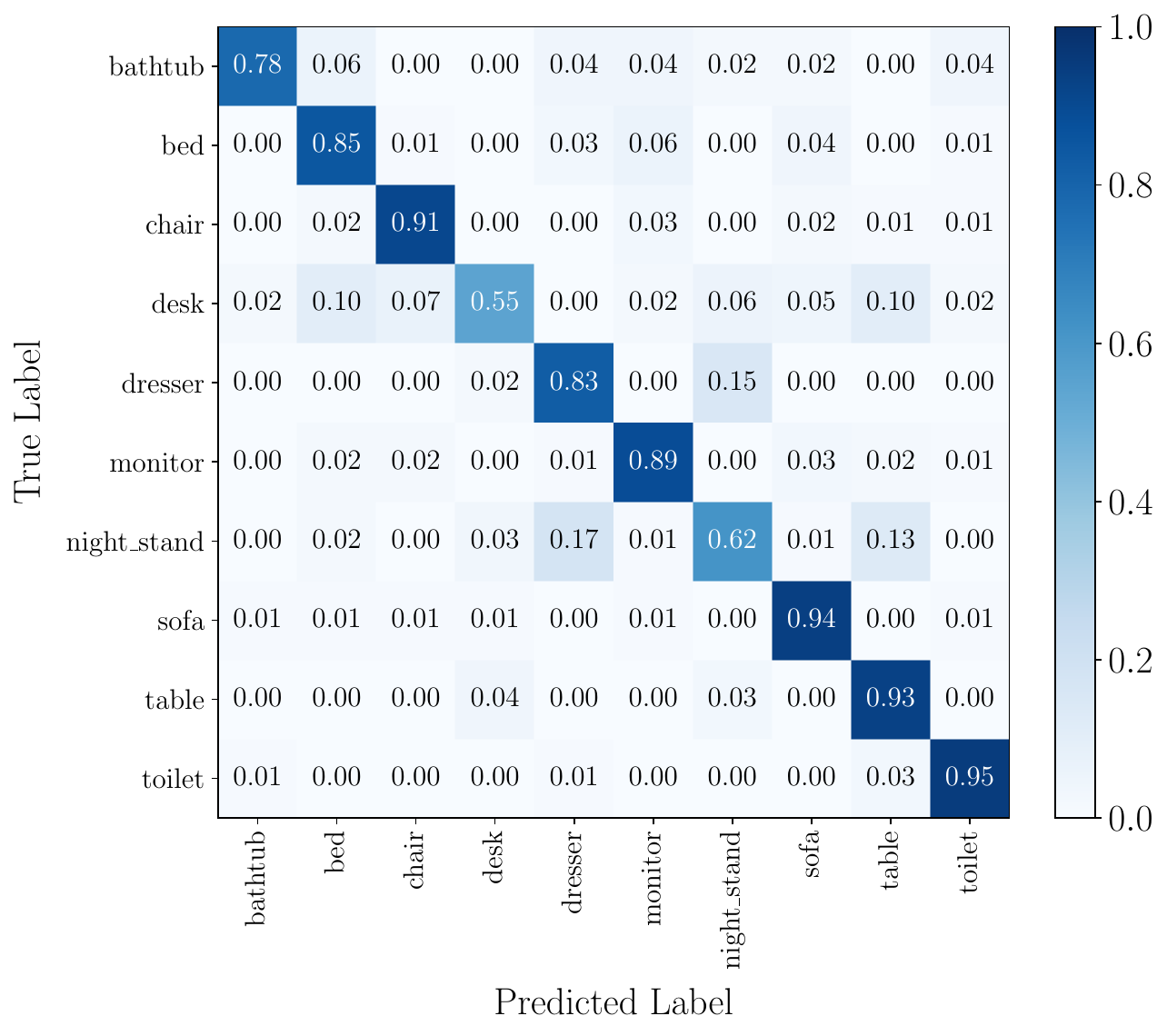}
  \caption{Normalized confusion matrix of the proposed Bayesian steerable CNN on the ModelNet10 test set. The model performs strongly on geometrically distinctive classes such as toilet (0.95), sofa (0.94), and chair (0.91). The primary failure modes are concentrated in desk (0.55) as it gets confused with classes exhibiting geometric resemblance.}
  \label{fig:confusion_matrix}
\end{figure}

\subsection{Generalization}
To evaluate the robustness of the proposed Bayesian 3D Steerable CNN against input distribution shift, both the Bayesian and deterministic baselines are trained exclusively on clean ModelNet10 data (noise level $\epsilon_{\text{noise}} = 0.0$) and evaluated on test sets corrupted with additive Gaussian noise at levels $\epsilon_{\text{noise}} \in {0.0, 0.1, \ldots, 0.9}$. As shown in Figure~\ref{fig:generalization_plot}, both models achieve comparable accuracy at low noise levels ($\epsilon_{\text{noise}} \leq 0.4$), with the deterministic model performing marginally better at $\epsilon_{\text{noise}} = 0.1$. However, as the noise level increases beyond $\epsilon_{\text{noise}} = 0.4$, the Bayesian model exhibits markedly superior robustness: at $\epsilon_{\text{noise}} = 0.9$, the Bayesian model retains a test accuracy of approximately $81.83\%$, compared to $75.66\%$ for the deterministic counterpart — a gap of $6.17\%$. This degradation trend is monotonically more pronounced for the deterministic model, whereas the accuracy Bayesian model decreases at a substantially lower rate across the noise spectrum. We attribute this behavior to the weight uncertainty induced by the variational posterior, which acts as an implicit regularizer and prevents overconfident, brittle point-estimate solutions. These results suggest that Bayesian inference in steerable convolutional networks confers meaningful robustness to out-of-distribution perturbations without any exposure to noise in data during training.

\begin{figure}[htbp]
  \centering
  \includegraphics[width=0.99\linewidth]{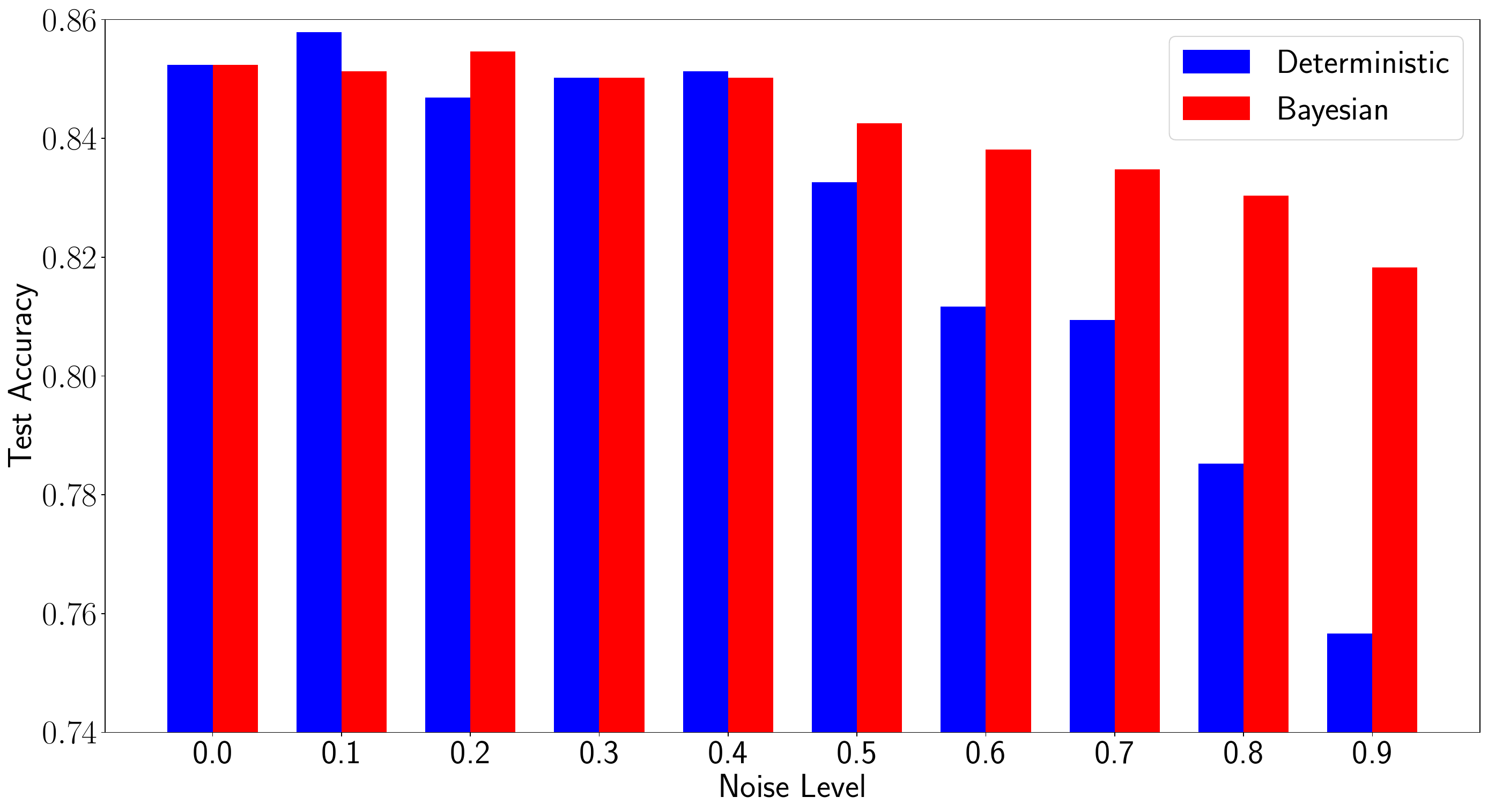}
  \caption{Generalization performance under increasing test data noise. Test accuracy on ModelNet10 as a function of additive Gaussian noise level $\epsilon_{\text{noise}} \in \{0.0, 0.1, \ldots, 0.9\}$,
  with both models trained on clean data ($\epsilon_{\text{noise}} = 0.0$). The Bayesian model degrades more gracefully, outperforming the deterministic baseline by up to $6.17\%$ at $\epsilon_{\text{noise}} = 0.9$.}
  \label{fig:generalization_plot}
\end{figure}

\subsection{Uncertainty Quantification} 
According to Equation~\ref{eq:uncertainty}, we decompose the total predictive uncertainty into its epistemic and aleatoric components, where epistemic uncertainty reflects model uncertainty reducible with more data, and aleatoric uncertainty reflects irreducible noise inherent in the data. Figure~\ref{fig:epoch_uncertainty} shows that total uncertainty decrease steadily over training, converging from approximately 0.4576 to 0.1370, and aleatoric uncertainty decreases too from 0.4308 to 0.1084 by epoch 1000, as the model refines its posterior over the basis coefficients. Crucially, the epistemic uncertainty remains consistently low throughout training ($u_{\mathrm{epi}} \in [0.0136, 0.0362]$), indicating that the model is confident in its learned weight distribution and that the dominant source of uncertainty in the predictions is aleatoric rather than epistemic. 

\begin{figure}[htbp]
  \centering
  \includegraphics[width=0.8\linewidth]{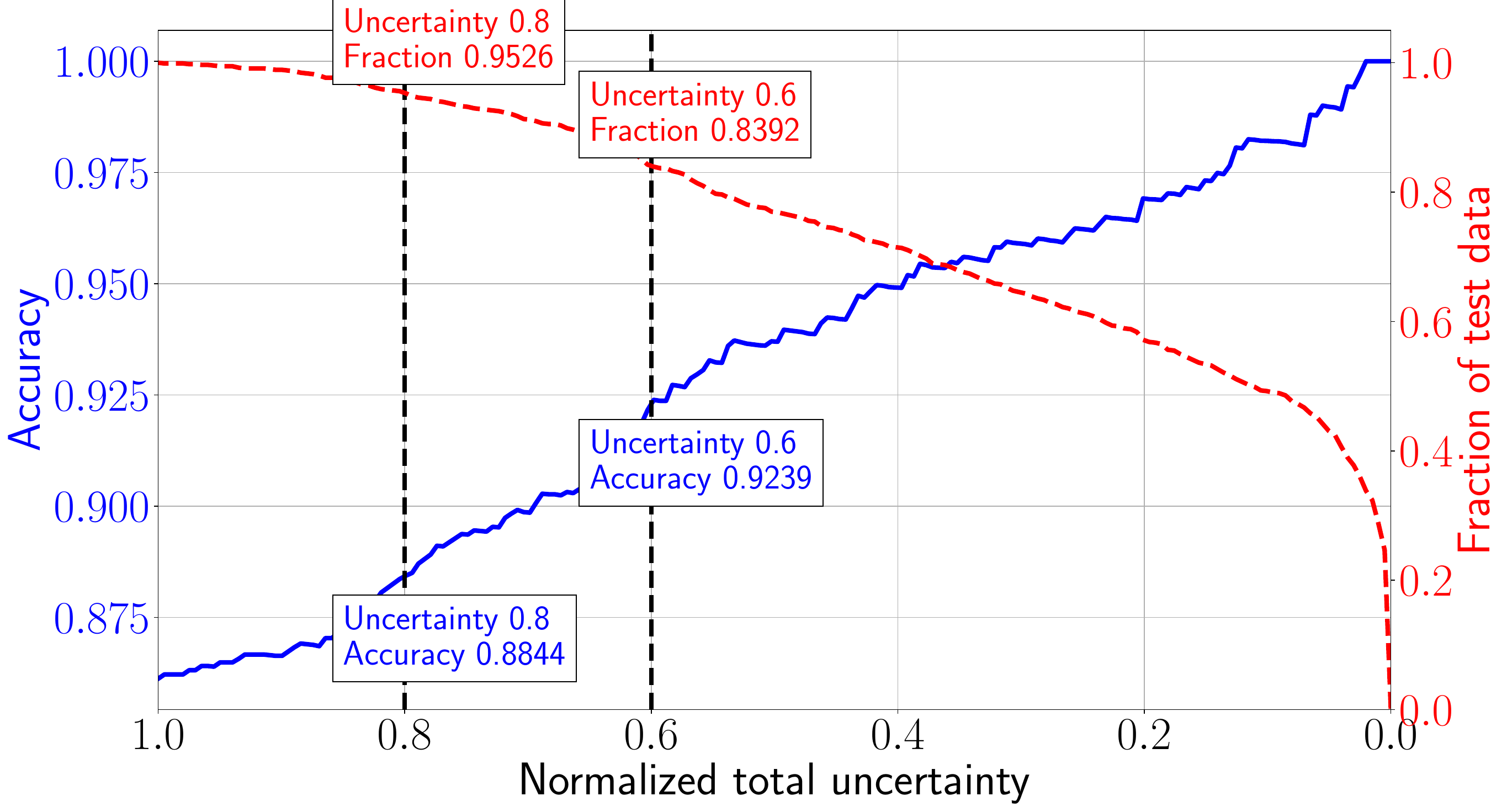}
  \caption{Performance improvements for a subset of test data that has
uncertainty less than the uncertainty–threshold (0.6 and 0.8) represented by vertical dotted lines.}
  \label{fig:threshold_uncertainty}
\end{figure}

In Figure~\ref{fig:threshold_uncertainty}, we estimate the reliability of the uncertainty estimates of the model by analyzing classification accuracy on low uncertainty (high-confidence) subsets of the test data. During the inference on the test dataset using the trained model, the total uncertainty of each test sample is normalized using the minimum and maximum total uncertainty values in the test set. For the marked uncertainty threshold, the test dataset is partitioned into two subsets with one subset having uncertainty lower than the threshold and the other with uncertainty above the threshold. The fraction of test data samples present in the lower uncertainty subset and the corresponding classification accuracy are plotted against the normalized total uncertainty in Figure~\ref{fig:threshold_uncertainty}. When the threshold is tightened, the retained fraction of data in low-uncertainty subset decreases while the accuracy improves. This demonstrates that the total uncertainty estimates of the model are well-calibrated and identify reliably the more confidently predicted samples.

\subsection{Reliability Diagram}
To assess the reliability of predictive uncertainty, we first grouped the test dataset into 15 equal-width confidence bins. We then evaluate the posterior predictive probabilities of outputs in a bin and plot them against the mean bin accuracy.
 The model achieves an Expected Calibration Error (ECE) of 0.0263, indicating strong alignment between predicted confidence and observed accuracy without any additional calibration procedure. The reliability diagram as shown in Figure \ref{fig:sample_reliability_diagram} reveals a mild but consistent underconfidence in the intermediate confidence regime $\text{conf}(B_{7-11}) \in [0.40, 0.733]$, where the accuracy of individual bins exceeds the stated confidence of the model by up to 9.3\% points. In the high-confidence regime: $\text{conf}(B_{14-15}) > 0.87$, which accounts for approximately 64\% of all test samples, the model is slightly overconfident by at most 2.7\% points which is a substantially smaller deviation. The confidence histogram confirms that the majority of predictions concentrate in the rightmost bin $B_{15}$ with $\text{conf}(B_{15}) > 0.933$ and $|B_{15}| = 493$, consistent with a well-trained classifier operating largely in the high-certainty regime. Altogether, these results demonstrate that the Bayesian treatment of the steerable basis coefficients via Bayes-by-Backpropagation yields not only competitive classification accuracy but also well-calibrated uncertainty estimates, supporting the trustworthiness of the model's confidence outputs for downstream decision-making.

\subsection{Uncertainty-Accuracy Correspondence}
To validate that the epistemic uncertainty estimates produced by our Bayesian G-Steerable CNN are semantically meaningful, we examine the relationship between binned normalized epistemic uncertainty and per-bin classification accuracy across the ModelNet10 test set. The epistemic uncertainty $u_{\mathrm{epi}}$ is normalized using min-max normalization scheme as $\hat{u}_{\mathrm{epi}}=\frac{u_{\mathrm{epi}} - \text{min}(u_{\mathrm{epi}})}{\text{max}(u_{\mathrm{epi}}) - \text{min}(u_{\mathrm{epi}})}$. Figure \ref{fig:sample_unc_vs_acc_diagram} presents this analysis, with the count histogram confirming the heavy concentration of probability mass at low uncertainty values. The distribution is strongly right-skewed: 58.7\% of test samples reside in the lowest uncertainty bin $\hat{u}_{\mathrm{epi}}(B_{1}) \in [0, 0.067]$ with mean epistemic uncertainty in bin $B_1$ as $\langle \hat{u}_{\mathrm{epi}}(B_{1}) \rangle = 0.0084$, achieving a per-bin accuracy $\text{acc}(B_1)$ well above the overall test accuracy of 86.3\%. A clear monotonically decreasing trend in accuracy is observed across the low-to-mid uncertainty range ($\hat{u}_{\mathrm{epi}}(B_{1-8})$ $\leq$ 0.53), with accuracy declining from 96.44\% to 55.17\%, consistent with the model assigning higher epistemic uncertainty to ambiguous inputs. The Spearman rank correlation between normalized epistemic uncertainty and per-sample correctness -0.3989, confirming a statistically significant negative monotonic relationship. We note that two bins $B_{12}$ and $B_{15}$ in the high-uncertainty regime $\hat{u}_{\mathrm{epi}}(B_{12-15}) \geq 0.733$ contain very few samples, $|B_{12}|=2, \; |B_{15}|=1$, and exhibit non-monotonic accuracy values, which we attribute to small-sample noise rather than a breakdown of the uncertainty signal. Collectively, these results demonstrate that the posterior variance estimated via Bayes-by-Backpropagation over the steerable convolutional weights reflects prediction reliability: the model assigns higher uncertainty to the samples with higher classification errors.

\begin{figure}[htbp]
  \centering
  \begin{subfigure}[b]{0.49\linewidth}
    \centering
    \includegraphics[width=\linewidth]{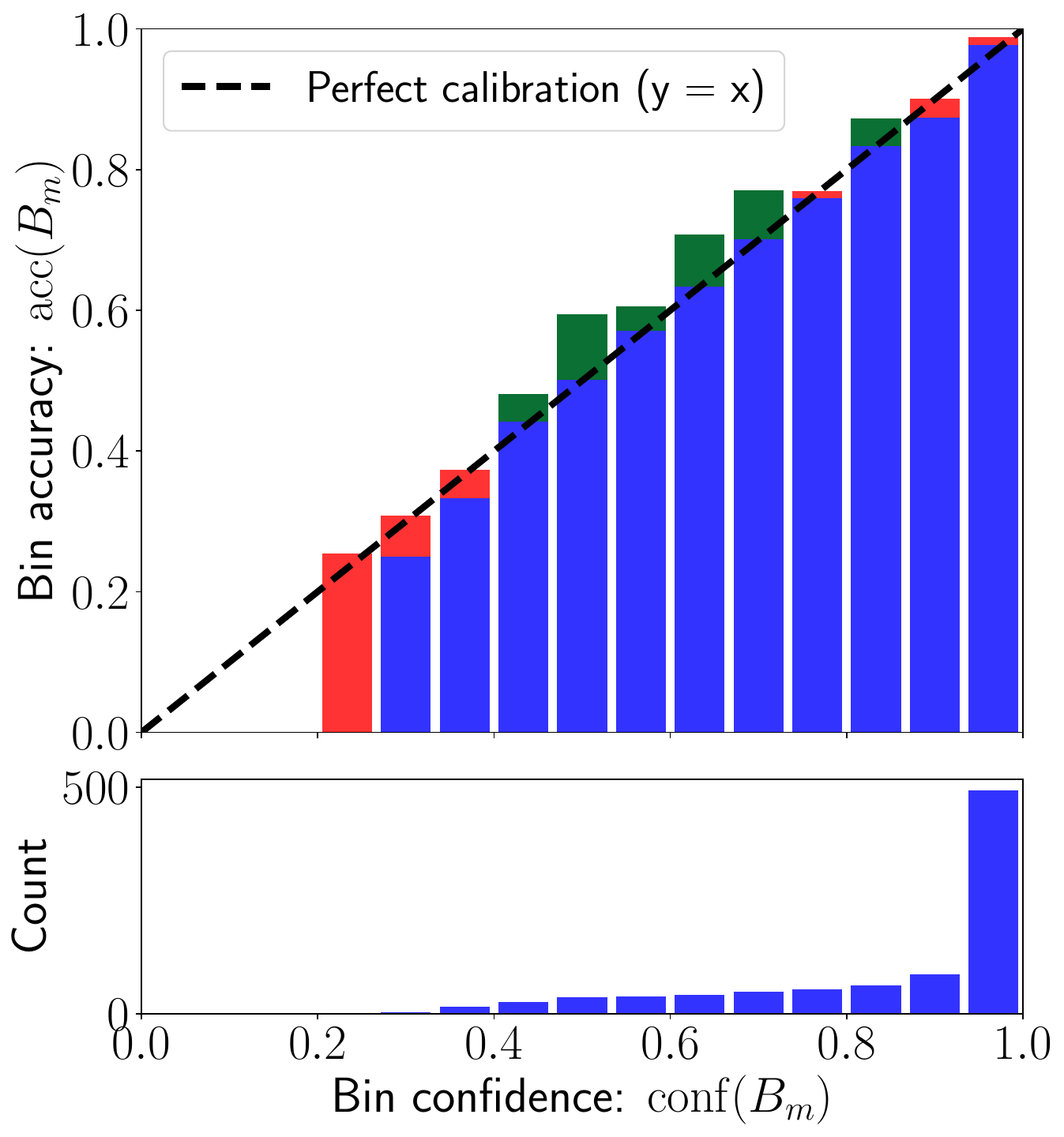}
    \caption{Reliability diagram of the proposed model on ModelNet10 dataset}
    \label{fig:sample_reliability_diagram}
  \end{subfigure}
  \hfill
  \begin{subfigure}[b]{0.49\linewidth}
    \centering
    \includegraphics[width=\linewidth]{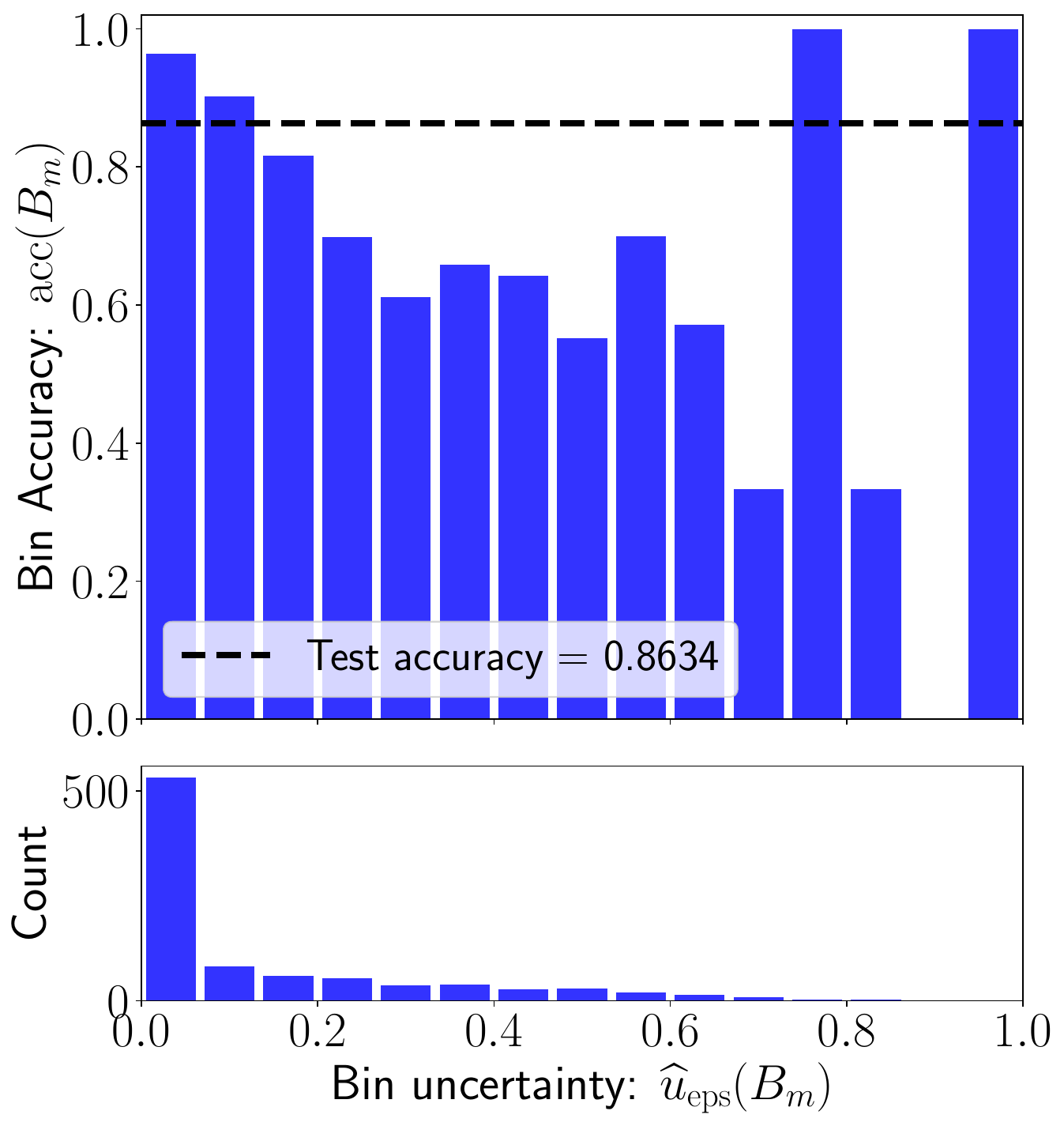}
    \caption{Uncertainty-Accuracy Correspondence of the proposed model on ModelNet10 dataset}
    \label{fig:sample_unc_vs_acc_diagram}
  \end{subfigure}
  \caption{\textbf{(a)} Reliability diagram of the proposed model on the ModelNet10 test set (N = 908), showing strong calibration (ECE = 0.0263). \textcolor{red}{\textbf{Red}} corresponds to overconfidence gap from perfect calibration. \textcolor[HTML]{0A7033}{\textbf{Green}} corresponds to underconfidence gap from perfect calibration. \textbf{(b)} Bin Accuracy vs. Normalized epistemic uncertainty for our model on ModelNet10 dataset. Each bar shows the fraction of correct predictions within a normalized epistemic uncertainty bin; the count histogram (bottom) reflects the strong concentration of test samples at low uncertainty.}
  \label{fig:uq_validate}
\end{figure}

\vspace{-10pt}
\section{Limitations}\label{sec:Limitations}
The proposed Bayesian framework doubles the number of parameters, leading to higher computational cost than deterministic steerable CNNs. In addition, specifying suitable priors for unseen datasets can be challenging and may influence results. Finally, as the model effectively averages over multiple deterministic networks, it may exhibit slightly lower predictive accuracy than its deterministic counterpart. Steerable CNNs are parameter-efficient but more computationally demanding than standard CNNs, and may therefore be unsuitable for applications where equivariance is not essential. The steerable basis grows rapidly with the polynomial order of the field type and the size of the group. For higher-order representations, the steerable CNNs can become very large, making the Bayesian layer significantly more expensive than its deterministic counterpart as adding stochasticity doubles the parameters of the model.

\section{Conclusions}
We have proposed a Bayesian formulation of 3D Steerable-CNNs that preserves SE(3)-equivariance exactly while providing principled uncertainty quantification. Stochastic equivariant kernels are parameterized as linear combinations of deterministic steerable basis functions with stochastic coefficients, whose posterior distributions are inferred via variational inference. This provides a principled approach to learning distributions over symmetry-constrained kernels, enabling simultaneous uncertainty quantification and SE(3) equivariance in 3D CNNs. Empirical evaluation demonstrates that the proposed Bayesian Steerable-CNN attains classification accuracy competitive with its deterministic counterpart. Uncertainty-guided selective prediction yields an approximately 4\% improvement in classification accuracy on 84\% of the test set, demonstrating that the learned uncertainty is meaningful. The Bayesian formulation further admits a principled decomposition of predictive uncertainty into epistemic and aleatoric components, enabling diagnostic attribution of error sources and facilitating targeted refinement of the model. Under distributional shift induced by additive Gaussian noise of increasing severity, the proposed model exhibits markedly improved robustness over its deterministic counterpart, retaining a 6.17\% accuracy advantage at 90\% noise intensity. Reliability analysis confirms well-calibrated predictive probabilities without recourse to any post-hoc calibration procedure. Moreover, a statistically significant negative Spearman rank correlation between normalized epistemic uncertainty and per-bin mean accuracy establishes that the learned posterior variance captures statistically meaningful signal about prediction reliability — samples assigned higher epistemic uncertainty exhibit systematically larger classification error — validating the model's uncertainty estimates as a principled basis for risk-aware decision-making.

\section{Acknowledgments}
This work was supported by National Science Foundation under Grant No. 2442313. This work used Bridges-2 at Pittsburgh Supercomputing Center through allocation CIS260992 from the Advanced Cyberinfrastructure Coordination Ecosystem: Services \& Support (ACCESS) program, which is supported by U.S. National Science Foundation grants \#2138259, \#2138286, \#2138307, \#2137603, and \#2138296. We have also used MRI/DeepBlizzard GPU cluster computing facility at MTU. The authors would like to thank Dr. Gabriele Cesa and Dr. Maurice Weiler for their generous assistance via email correspondence in clarifying various technical aspects of the \texttt{escnn} library.

\clearpage
\bibliographystyle{plainnat}
\bibliography{refs}

\clearpage
\appendix
\appendix

\section*{Appendix}
\addcontentsline{toc}{section}{Appendix}

\section{Architecture Details}
\label{sec:appendix_arch_details}

\begin{figure}[b!]
    \centering
    \includegraphics[width=0.99\linewidth]{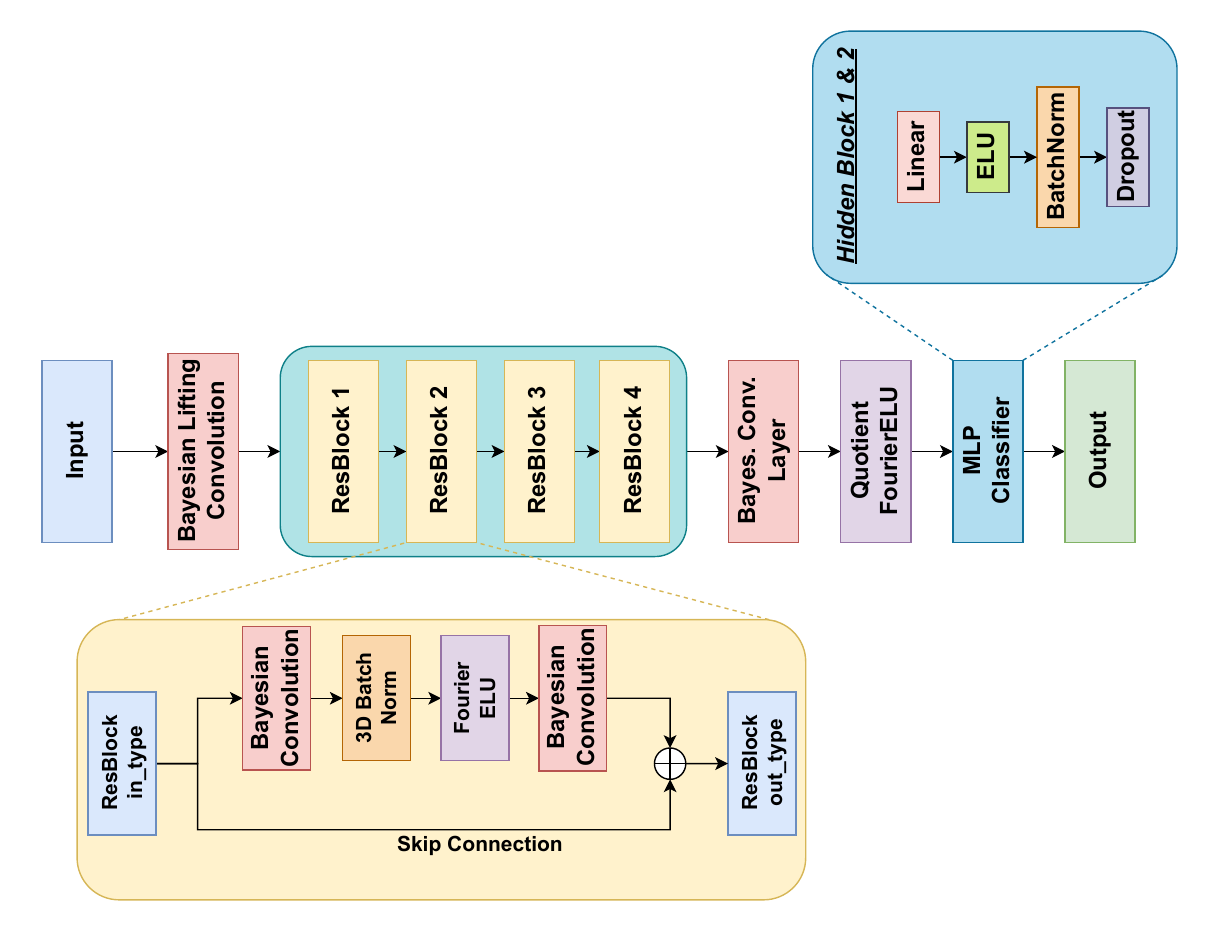}
    \caption{Full Bayesian SE(3)-equivariant architecture. Every \texttt{R3Conv} layer in the steerable backbone of~\citet{cesa2022program} is replaced with a \texttt{BayesianR3Conv} that places a mean-field Gaussian posterior over the steerable basis coefficients. The final invariant feature vector is classified by a deterministic three-layer MLP}
    \label{fig:model_architecture}
\end{figure}

The full SE(3)-equivariant Bayesian residual architecture used in all experiments is shown in Figure~\ref{fig:model_architecture}. The network takes a scalar voxel grid of shape $(B, 1, D, H, W)$, interpreted as a trivial-representation field, and first applies a lifting \texttt{BayesianR3Conv} with kernel size~5 and padding~2 that maps it to the steerable field type $(\rho_2^{\mathrm{poly}})^{\times 3}$, where $\rho_2^{\mathrm{poly}} = \rho_0 \oplus \rho_1 \oplus \rho_2$ has dimension $1 + 3 + 5 = 9$. Four residual blocks then progressively deepen the representation through output field types $(\rho_3^{\mathrm{poly}})^{\times 2}$, $(\rho_3^{\mathrm{poly}})^{\times 6}$, $(\rho_3^{\mathrm{poly}})^{\times 12}$, and $(\rho_3^{\mathrm{poly}})^{\times 8}$, with width parameters 200, 480, 480, and 960 respectively, where $\rho_3^{\mathrm{poly}} = \rho_0 \oplus \rho_1 \oplus \rho_2 \oplus \rho_3$ has dimension 16. Each block halves the spatial resolution via a stride of 2.

Each residual block applies the sequence \texttt{BayesianR3Conv} ($3{\times}3{\times}3$, padding~1) $\to$ \texttt{IIDBatchNorm3d} (affine) $\to$ \texttt{FourierELU} (Thomson-cube sampling of $\mathrm{SO}(3)$ with $S=96$ points, band-limited to $L=2$) $\to$ \texttt{BayesianR3Conv} ($3{\times}3{\times}3$, padding~1, stride~2 when downsampling). Skip connections use \texttt{PointwiseAvgPoolAntialiased3D} (scale 0.33, stride~2) followed by a $1{\times}1{\times}1$ \texttt{BayesianR3Conv} when spatial or channel dimensions change, and the identity otherwise.

A final \texttt{BayesianR3Conv} (kernel 3, no padding) maps to the pooling layer's input type, and a \texttt{QuotientFourierELU} samples 128 feature fields of band-limited type ($L=2$) on the 24-point snub-cube grid of $S^2$~\citep{cesa2022program}, retaining only the $\ell=0$ (rotationally invariant) component to produce an invariant feature vector $\mathbf{z} \in \R^{128}$. This is classified by a deterministic three-layer MLP.



\section{ModelNet10 data samples}
ModelNet10~\citep{wu20153d} is a subset of the broader ModelNet40 benchmark, comprising 4,899 CAD models spanning 10 object categories: bathtub, bed, chair, desk, dresser, monitor, night stand, sofa, table, and toilet. The dataset provides an official 80\%--20\% train/test split, yielding 3,991 training and 908 test samples. Each model is represented as a polygon mesh, which we voxelize into a $33\times33\times33$ occupancy grid to obtain a scalar field suitable for input to the SE(3)-equivariant architecture. 
The further division for training, validation, and testing are as discussed in Sec .~\ref {subsec:modelnet10_classification}
The dataset is moderately class-imbalanced, with categories such as nightstand and dresser being underrepresented relative to chair and sofa. A few voxelized samples from each category are shown in Figure~\ref{fig:data_matrix_more_samples}.

\begin{figure}[htbp!]
    \centering
    \includegraphics[width=1\linewidth]{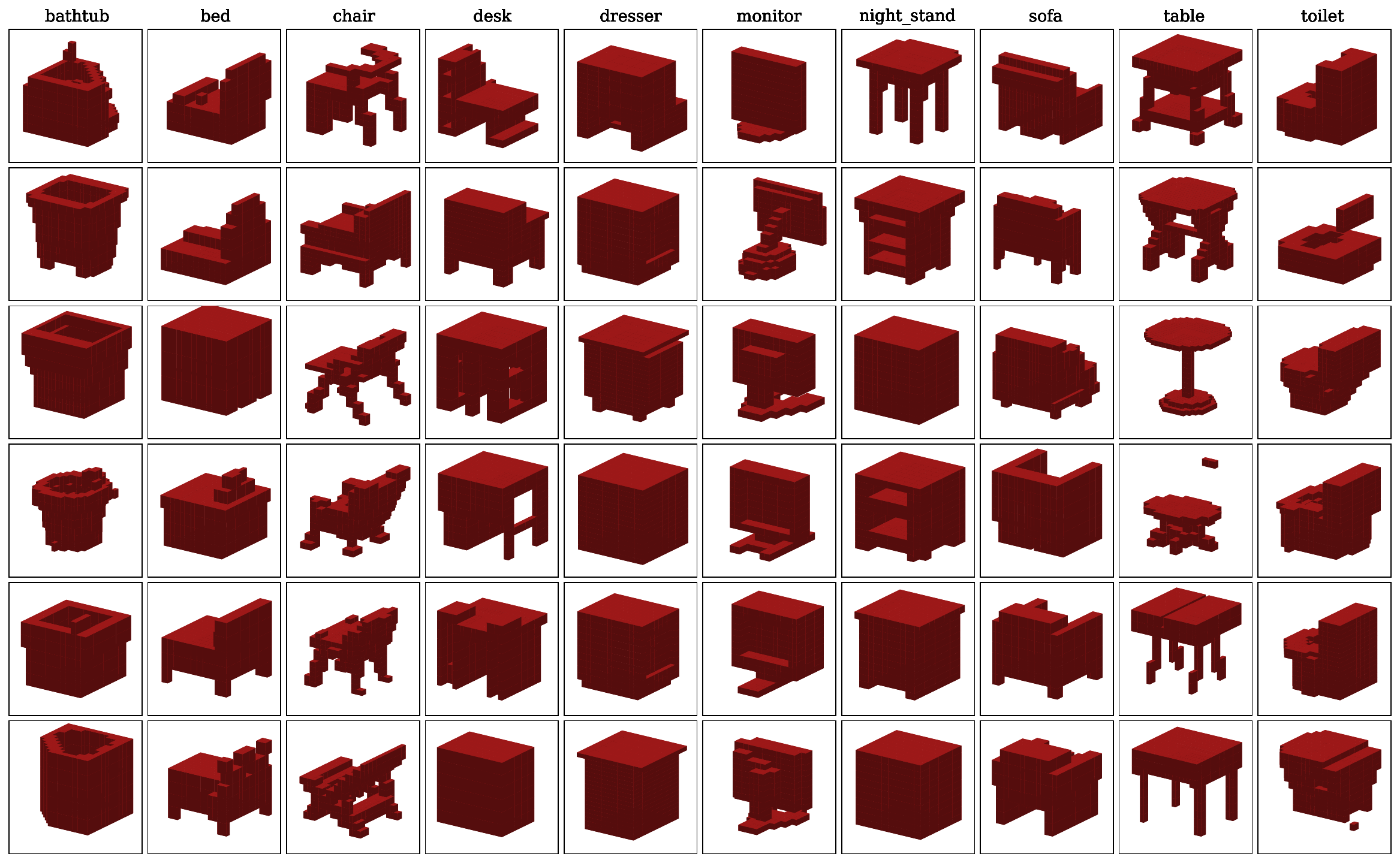}
    \caption{Sample voxelized 3D objects from the ModelNet10 dataset, with each column representing a category of a sample.}
    \label{fig:data_matrix_more_samples}
\end{figure}

\section{Invariance Verification} \label{sec:invariance_verification}
The proposed architecture is designed to be equivariant layer-by-layer throughout the steerable convolutional backbone, where intermediate feature fields transform predictably under the action of $\mathrm{SO}(3)$ according to their assigned representation type $\rho$. This layer-wise equivariance is then deliberately collapsed to invariance at the final pooling layer, Quotient Fourier ELU, which retains only the $\ell = 0$ trivial representation component of the feature field --- the rotationally invariant subspace --- and discards all higher-order equivariant components. The resulting 128-dimensional invariant vector is then passed to a deterministic three-layer MLP for classification, whose output class probabilities are therefore invariant to any rotation of the input by construction. The full pipeline thus realises a principled progression: $\mathrm{SO}(3)$-equivariant feature extraction in the convolutional layers, followed by symmetry reduction to invariance at the pooling layer, followed by invariant classification.

To empirically verify that this invariance is preserved under Bayesian weight sampling, we measure the invariance error, defined as the change in the predicted class probabilities when the input is rotated as stated in Eq. \ref{eq:equiv_error}. 

For a weight sample $\mathbf{w}^{(s)} \sim q$ drawn from the trained variational posterior, the measured invariance error is $\epsilon_{\mathrm{inv}} = 3.41 \times 10^{-5}$, with the numerical errors arising from floating-point arithmetic and the discrete voxel grid approximation of the continuous group action. This confirms that stochasticity in the steerable basis coefficients --- the defining feature of our Bayesian extension --- does not compromise the rotational invariance of the final classification output, as guaranteed theoretically by Proposition~\ref{prop:equivariance}.

\end{document}